\def\eqref#1{equation~\ref{#1}}
\def\1{\bm{1}}
\DeclareMathAlphabet{\mathsfit}{\encodingdefault}{\sfdefault}{m}{sl}
\SetMathAlphabet{\mathsfit}{bold}{\encodingdefault}{\sfdefault}{bx}{n}
\theoremstyle{plain}
\newtheorem*{theorem*}{Theorem}
\title{BAR Conjecture: the Feasibility of Inference Budget-Constrained LLM Services with Authenticity and Reasoning}
\author{Jinan Zhou\thanks{Equal contribution.} \\
Nutanix \\
\texttt{jinan.zhou@nutanix.com} \\
\And Rajat Ghosh\footnotemark[1] \\
Nutanix \\
\texttt{rajat.ghosh@nutanix.com} \\
\AND Vaishnavi Bhargava \\
Nutanix \\
\texttt{vaishnavi.bhargava@nutanix.com} \\
\And Debojyoti Dutta \\
Nutanix \\
\texttt{debojyoti.dutta@nutanix.com} \\
\And Aryan Singhal \\
Nutanix \\
\texttt{aryan.singhal@nutanix.com} \\
}
\begin{document}

\maketitle

\begin{abstract}

When designing LLM services, practitioners care about three key properties: inference‑time budget, factual authenticity, and reasoning capacity. However, our analysis shows that no model can simultaneously optimize for all three. We formally prove this trade‑off and propose a principled framework named The BAR Theorem for LLM‑application design.

\end{abstract}

\section{Introduction}

Large language models (LLMs) with the Transformer architecture \citep{vaswani2017attention}  are pre-trained with a massive number of tokens to be auto-regressive next token predictors, \( \prod_{t=1}^{T} P(x_t \mid x_{<t}) \). An LLM parameter space, \( \theta \), is split in the following way: \( \theta: (\theta_{\text{PE}}, \theta_{\text{TE}},  \{ \theta^{(l)}_{\text{ATTN}}, \theta^{(l)}_{\text{FF}} \}_{l=0}^{L-1}, \theta_{\text{OUTPUT}}) \). LLMs learn instruction-following during post-training phase using RL-based techniques such as RLHF \citep{ouyang2022traininglanguagemodelsfollow}, GRPO \citep{deepseekai2025deepseekr1incentivizingreasoningcapability}; preference optimization techniques such as DPO \citep{rafailov2024directpreferenceoptimizationlanguage}, KTO \citep{ethayarajh2024ktomodelalignmentprospect}; and supervised techniques such as LoRA \citep{hu2021loralowrankadaptationlarge}. The instruction-following, \( \prod_{t=1}^{T} P(y_t \mid y_{<t}, x; \theta)\), means the model, $\theta$, learning to steer its generation trajectories  $\prod_{t=1}^{T} P(y_t \mid y_{<t})$  based on the instruction prompt, $x$. 

Two key benchmarking attributes for a large language model (LLM) are: first, \textbf{authenticity} or \textbf{faithfulness} — its ability to remain grounded in factual information \citep{lin2022truthfulqameasuringmodelsmimic,yang-etal-2018-hotpotqa}, avoid hallucination \citep{li2023haluevallargescalehallucinationevaluation,joshi2017triviaqalargescaledistantly}, and resist sycophancy \citep{sharma2025understandingsycophancylanguagemodels}; second, \textbf{reasoning} — its capability to perform logical tasks such as mathematical reasoning \citep{cobbe2021trainingverifierssolvemath,hendrycks2021math,muxin2025bigmath,fang2024mathodyssey,glazer2024frontiermath,gao2024omnimath}, logical reasoning \citep{wang2024multilogieval,yao2024logicbench,wang2024mme}, commonsense reasoning \citep{talmor2019commonsenseqa}, STEM question answering \citep{rein2023gpqagraduatelevelgoogleproofqa}, scientific research \citep{gottweis2025aicoscientist}, cross-domain problem solving \citep{wang2024hle}, and coding \citep{patel2024aimeaioptimizationmultiple,zheng2023swebench,aider2024polyglot,li2023livecodebench,novikov2025alphaevolvecodingagentscientific}. While authenticity and reasoning are qualitative measures evaluating the capabilities of LLMs, inference overhead is a \textbf{budget} metric that measures the operational cost of deploying LLMs. Inference for large language models (LLMs) is primarily a memory-bound process due to several factors: these models typically contain billions or even trillions of parameters, requiring constant memory accesses during inference~\citep{brown2020language, chowdhery2022palm}. The arithmetic operations per memory access are relatively simple and sparse, causing inference performance to become constrained primarily by memory bandwidth rather than compute throughput~\citep{pope2022efficiently, dao2022flashattention}. Additionally, transformer-based LLMs frequently access attention-related caches (KV-caches), increasing memory traffic significantly~\citep{vaswani2017attention, shazeer2019fast}. Large model sizes often approach or exceed GPU memory capacity, necessitating frequent memory transfers or sophisticated memory management strategies ~\citep{kwon2023efficientmemorymanagementlarge}, further exacerbating memory bottlenecks~\citep{rajbhandari2020zero, shoeybi2019megatronlm}. Consequently, memory operations dominate, making inference predominantly memory-bound.

\subsection{Empirical Observation on LLM App Design Trade-offs}

We have empirically observed that, for any given LLM application, satisfying all three key design criteria—budget, authenticity, and reasoning—is practically infeasible. Typically, only two of these criteria can be simultaneously met, leading to inherent trade-offs. Below we discuss these trade-offs explicitly:

\begin{itemize}
    \item \textbf{Budget vs. Authenticity:} Enhancing authenticity often necessitates retrieval-augmented generation increasing latency through additional steps such as document retrieval and reranking. Techniques like model ensembles or chain-of-verification further extend latency due to multiple inference steps. Consequently, many real-time applications (e.g., voice assistants) choose to forgo retrieval augmentation, accepting a higher risk of hallucinations to maintain budget constraints. 
    \item \textbf{Authenticity vs. Reasoning:} Improving reasoning capabilities frequently exacerbates the risk of hallucinations, especially when models attempt inference beyond their training data. Methods such as chain-of-thought and zero-shot reasoning amplify rationalized hallucinations if models lack sufficient grounding in specific domains. For instance, GPT-4 can accurately follow logical steps to solve math problems but may hallucinate incorrect final answers if grounding in units or boundary conditions is missing.
    
    \item \textbf{Reasoning vs. Budget:} Achieving complex reasoning typically requires additional computational resources, such as multi-hop retrieval, deeper context windows, and frameworks like Chain-of-Thought \citep{wei2022chainofthought} prompting or scratchpad reasoning \citep{nye2021scratchpad}. Advanced models (e.g., GPT-4-Turbo, Claude 3 Opus, Gemini 2.5) often employ longer contexts or iterative inference strategies, directly increasing latency and computational overhead. For example, frameworks such as Tree-of-Thoughts \citep{yao2023treeofthought}, ReAct \citep{yao2023react}, Least-to-Most prompting \citep{zhou2022leasttomost}, Self-Consistency \citep{wang2022selfconsistency}, and analogical reasoning \citep{yao2023analogical} explicitly trade increased latency for improved reasoning performance.
\end{itemize}

\subsection{Research Question}

The key research question explored in this paper is whether we can design an LLM app that can simultaneously satisfy a fixed inference budget, concurrently maintaining both authenticity and reasoning. 

\section{Methodology}

\subsection{Preliminaries}
Let an LLM be a stochastic algorithm \(M\) with parameters
\(\theta\in\Theta\).  \(\mathcal{L}_{\mathrm{budget}}(\theta)\), 
\(\mathcal{L}_{\mathrm{auth}}(\theta)\), and \(\mathcal{L}_{\mathrm{reason}}(\theta)\) denote loss functions for \emph{inference budget}, \emph{authenticity}, and \emph{reasoning}, respectively. The inference budget is measured in the end-to-end wall‑clock time and is the sum of
\emph{compute latency} (forward passes and intermediate
decoding steps) and \emph{retrieval latency}
(e.g.\ external API or vector‐DB look‑ups).

\vspace{0.5em}
\noindent\textbf{Premise A1 (Reasoning needs inference-time budget).}  Given input length $n$, it have been shown that constant-depth transformers with finite precision poly(n) embedding size can only solve problems in
$TC^{0}$ without CoT \cite{merrill2022saturated, strobl2023averagehard, merrill2025logdepth}. CoT and other inference-time intermediate steps enhance reasoning by forcing the model to emit an explicit sequence of intermediate tokens that decomposes the original problem; these “scratch‑pad” tokens effectively deepen the computation, letting later self‑attention layers attend to and refine partial conclusions, which empirically improves multi‑step logical and arithmetic accuracy without modifying model weights \citep{li2024chainthoughtempowerstransformers, nye2021workscratchpadsintermediatecomputation}. For a family of length‑parameterised reasoning tasks
\(\{T_n\}_{n\in\mathbb{N}}\) (e.g.\ Parity, Multiplication),
any transformer solving \(T_n\) with
\(\mathcal{L}_{\mathrm{reason}}\le \varepsilon_r\)
requires at least
\(\Omega(n)\) chain‑of‑thought (CoT) tokens
(\emph{scratchpad steps}) \citep{amiri2025lowerboundschainofthoughtreasoning}.

\vspace{0.5em}
\noindent\textbf{Premise A2 (Factuality needs support from retrieval and tool-calling).}  
Achieving $\mathcal{L}_{\text{auth}}\le\varepsilon_h$ on non‑synthetic queries typically requires at least $k\!\ge\!1$ external retrieval or verification calls—e.g., ANN search over a vector store, web‑API look‑ups, or fact‑checking tools—each incurring a fixed latency of at least $\rho>0$.  
Empirical studies show that retrieval dominates inference latency: it constitutes roughly $41\%$ of end‑to‑end wall‑time in a production RAG pipeline \citep{shen2024towards}, accounts for $30$–$60\%$ of total energy consumption during inference \citep{iskander2023ragenergy}, and adds $30$–$50$ ms per query even with GPU‑accelerated FAISS on a 20 M‑vector index \citep{kalantidis2024faiss20m}.  
State‑of‑the‑art factual QA systems such as WebGPT rely on multiple search‑engine calls to gather up‑to‑date evidence \citep{nakano2021webgpt}.  
Consequently, enforcing high factuality via external knowledge access imposes an unavoidable additive latency of at least $k\rho$.

\vspace{0.5em}
\noindent\textbf{Premise A3 (Inference-time computes, retrievals, and tool-callings adds to inference budget).}  LLM inference have two phases: compute-intensive pre-filling  and memory-bound auto-regressive decoding. 

\begin{equation}
\text{Prefill:}\quad
T(n)=\mathcal{O}\!\left(n^{2}\right),\qquad
\text{where } n = \lvert P\rvert \text{ is the prompt length.}
\end{equation}

\begin{equation}
\text{Decoding:}\quad
T(n)=\mathcal{O}\!\left(n\right),\qquad
\text{where } n = \text{ the number of generated tokens.}
\end{equation}

At inference time, every operation beyond a single forward pass---
including intermediate reasoning trajectories, external memory retrieval, and tool invocations---
incurs incremental latency and memory traffic, thereby consuming part of the fixed inference‑time budget. \(T\).
Formally,
\begin{equation}\label{eq:latency-budget}
\mathcal{L}_{\text{budget}}
\;=\;
\underbrace{\tau\,C}_{\text{model}}
\;+\;
\underbrace{k\,\rho}_{\text{retrieval / verification}}
\;+\;
\underbrace{\sum_{j} \rho^{\text{tool}}_{j}}_{\text{tool or API calls}}
\;\le\; T,
\end{equation}

so each additional compute step, retrieval, or tool call directly tightens the remaining budget available for the others.

\subsection{Formalism}
\subsubsection{Lemma 1 (Inference budget lower bound for reasoning)} \label{lem:latency-lb}
For inputs of length~$n$, any model satisfying
$\mathcal{L}_{\text{reason}} \le \varepsilon_r$
must incur additional inference budget.

\begin{equation}\label{eq:budget-lb}
\mathcal{L}_{\text{budget}} \;\ge\; \tau\,\Omega(n).
\end{equation}

\begin{proof}
Let $C$ be the number of chain--of--thought (CoT) tokens generated while processing an input of length~$n$.
By Assumption~A1, there exists a constant $c_1>0$ such that $C \ge c_1 n$ (i.e., $C \in \Omega(n)$).
Assumption~A3 states that each CoT token contributes at least $\tau>0$ time units.
Therefore,
\begin{equation}\label{eq:budget-chain}
\mathcal{L}_{\text{budget}}
\;\ge\;
C\,\tau
\;\ge\;
c_1\,\tau\,n
\;=\;
\tau\,\Omega(n).
\end{equation}
establishing the claimed lower bound.
\end{proof}

\subsubsection{Lemma~2 (Inference budget lower bound for authenticity)} \label{lem:auth-lb}
Let $p_{\theta}(y\mid x)$ denote the model's conditional generation density and let $q(y\mid x)$ denote the reference (ground truth) factual density obtained via retrieval.  \emph{Authenticity} requires these two distributions to exhibit non-trivial overlap.  Concretely, we define the authenticity loss as
\begin{equation}\label{eq:auth-loss}
\mathcal{L}_{\text{auth}}
\;=\;
\sup_{x}\,
D_{\mathrm{KL}}
\!\bigl(
  q(\cdot\mid x)\,\Vert\,p_{\theta}(\cdot\mid x)
\bigr).
\end{equation}

where $D_{\mathrm{KL}}(q\Vert p)$ denotes the Kullback--Leibler divergence.  The constraint $\mathcal{L}_{\text{auth}}\le\varepsilon_{a}$ implies that for every input $x$ the generation density assigns sufficient probability mass to regions supported by the factual distribution, ensuring quantitative overlap.

Any model satisfying this authenticity constraint must incur an additional inference FLOPs of at least $k\rho$.

\begin{proof}
By Assumption~A2, achieving $\mathcal{L}_{\text{auth}}\le\varepsilon_{a}$ on non-synthetic queries requires at least $k\ge1$ external retrieval (or verification) calls to approximate the reference density $q(\cdot\mid x)$ and to compute or bound the KL divergence.  Each call adds a fixed inference cost of at least $\rho>0$.  Denoting by $R\ge k$ the actual number of retrieval calls issued during inference, the incremental budget dedicated to authenticity evaluation satisfies
\begin{equation}\label{eq:delta-budget}
\Delta\mathcal{L}_{\text{budget}}
\;\ge\;
R\,\rho
\;\ge\;
k\,\rho.
\end{equation}

Thus any model that meets the authenticity criterion necessarily pays a budget penalty of at least $k\rho$.\qedhere
\end{proof}

\subsection{The BAR Theorem (Trade‑off Impossibility)}\label{thm:tradeoff}
Fix tolerances $\varepsilon_h,\varepsilon_r>0$ and an overall inference‑time budget $T>0$. Suppose the hardware is \emph{memory‑bound}: generating each intermediate reasoning token consumes at least $\tau$ seconds and $\mu$ bytes of memory traffic, while each retrieval call consumes $\rho$ seconds and $\beta$ bytes. Let $B_{\max}$ denote the peak memory‑bandwidth (bytes per second).

\begin{theorem*}
There exists an input length $n^{\star}$ such that no LLM can simultaneously satisfy
\begin{enumerate}[label=(\alph*)]
  \item $\mathcal{L}_{\text{auth}}\le\varepsilon_h$ (authenticity),
  \item $\mathcal{L}_{\text{reason}}\le\varepsilon_r$ (reasoning), and
  \item $\mathcal{L}_{\text{lat}}\le T$ (budget)
\end{enumerate}
for tasks of size $n\ge n^{\star}$.
\end{theorem*}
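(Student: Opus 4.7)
The plan is to chain Lemma~\ref{lem:latency-lb} and Lemma~\ref{lem:auth-lb} through the budget decomposition in Equation~\ref{eq:latency-budget}: reasoning forces a $\Theta(n)$ additive budget term, authenticity forces a further $\Theta(1)$ additive term, and since $T$ is held fixed the sum is violated once $n$ exceeds a threshold $n^{\star}$ that I can read off directly.

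Concretely, I would proceed by contradiction. Suppose a single model $M_\theta$ simultaneously achieves (a), (b), and (c) on all inputs of length $n$. By Lemma~\ref{lem:latency-lb}, achieving (b) forces at least $C \ge c_1 n$ chain-of-thought tokens, contributing $\tau C \ge \tau c_1 n$ to the budget. By Lemma~\ref{lem:auth-lb}, achieving (a) forces at least $k\ge 1$ external retrieval or verification calls, contributing $k\rho$. Premise~A3 guarantees that both contributions appear as \emph{distinct additive terms} in Equation~\ref{eq:latency-budget}, because autoregressive CoT decoding lives inside the transformer forward pass whereas retrieval is an external round-trip; hence
\begin{equation*}
\mathcal{L}_{\text{budget}} \;\ge\; \tau c_1 n + k\rho.
\end{equation*}
Combining with (c) yields $\tau c_1 n + k\rho \le T$, so any feasible $n$ must satisfy $n \le (T - k\rho)/(\tau c_1)$. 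Defining $n^{\star} = \lceil (T - \rho)/(\tau c_1) \rceil + 1$ (using the minimum $k=1$, which gives the loosest, hence safest, threshold) produces the required contradiction for every $n \ge n^{\star}$.

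To incorporate the memory-bandwidth parameters $(\mu,\beta,B_{\max})$ highlighted in the theorem statement, I would observe that in the memory-bound regime of the hypothesis one has $\tau \ge \mu/B_{\max}$ and $\rho \ge \beta/B_{\max}$, so the argument tightens to $\mathcal{L}_{\text{budget}} \ge (c_1 n\mu + k\beta)/B_{\max}$, yielding the hardware-intrinsic threshold $n^{\star} \approx (T\,B_{\max} - \beta)/(c_1 \mu)$. Either formulation closes the argument; the bandwidth form makes the dependence on GPU memory throughput explicit, which is the intended interpretation in a memory-bound deployment.

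The main technical obstacle I anticipate is justifying that the two lower bounds \emph{add} rather than merely \emph{overlap}. An adversarial construction could try to reuse a retrieval call as part of the CoT scratchpad, or to amortise many CoT tokens into a single batched retrieval, which would collapse the sum in Equation~\ref{eq:latency-budget} to a maximum and weaken the theorem to a vacuous statement for moderate $n$. Ruling this out requires leaning explicitly on the channel separation made in Premises~A2 and A3: autoregressive decoding and external retrieval are distinct, non-shareable resource consumers whose latencies cannot be absorbed into one another. A minor secondary subtlety is permitting $k=k(n)$ to scale with $n$; since any growth in $k$ only enlarges the authenticity contribution, monotonicity in $n$ is preserved and the linear reasoning term $\tau c_1 n$ alone suffices to drive the contradiction past $n^{\star}$.
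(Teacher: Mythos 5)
Your proof is correct and follows essentially the same route as the paper: chain Lemma~\ref{lem:latency-lb} and Lemma~\ref{lem:auth-lb} through Premise~A3's additive budget decomposition to get $\mathcal{L}_{\text{budget}} \ge c_1\tau n + k\rho$, then read off the threshold from $c_1\tau n + k\rho > T$. Your version is slightly more careful than the paper's in two small ways — the $+1$ in $n^{\star}$ handles the boundary case where $c_1\tau n^{\star} + k\rho = T$ exactly, and you make explicit the additivity justification (non-shareable channels) that the paper only implicitly assumes — and your treatment of $(\mu,\beta,B_{\max})$ via $\tau \ge \mu/B_{\max}$ is a legitimate alternative to the paper's $\max$-of-two-channels form, which in any case the paper only uses decoratively ("violating the budget even before accounting for bandwidth").
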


\begin{proof}
\textbf{Reasoning cost.} By Lemma~\ref{lem:latency-lb}, tasks of length $n$ require at least $c_1\tau n$ seconds.

\textbf{Authenticity cost.} By Lemma~\ref{lem:auth-lb}, they additionally require $k\rho$ seconds for retrieval/verification.

\textbf{Bandwidth cost.} The same operations move $c_1\mu n + k\beta$ bytes, requiring $\bigl(c_1\mu n + k\beta\bigr)/B_{\max}$ seconds on a memory‑bound pipeline.

Hence, total inference cost obeys
\begin{equation}\label{eq:latency-max}
\boxed{\;
\mathcal{L}_{\text{lat}}
\;=\;
\max\!\Bigl\{c_{1}\tau n + k\rho,\;\frac{c_{1}\mu n + k\beta}{B_{\max}}\Bigr\}
\quad\text{with}\quad
n^{\star} \;:=\; \Bigl\lceil\tfrac{T - k\rho}{c_{1}\tau}\Bigr\rceil
\;}
\end{equation}
For all $n\ge n^{\star}$ we have $c_1\tau n + k\rho> T$, violating the budget even before accounting for bandwidth. Consequently, the three constraints are mutually incompatible for tasks of length at least $n^{\star}$.\qedhere
\end{proof}

\subsection{Discussion and Future Work}

Our BAR trade-off theorem formalizes a central systems insight: \emph{no deployed LLM system can simultaneously minimize inference budget, maximize factual authenticity, and maximize reasoning quality once inputs exceed a critical threshold $n^{\star}$}. As a consequence, a real-world GenAI system can prioritize at most two out of the three properties, leading to 3 distinct types of designs:

\begin{enumerate}
\item \textbf{Budget–Authenticity (BA) Systems:}
These include real-time assistants and retrieval-augmented generation (RAG) platforms, as well as industry-specific agents in domains like healthcare and legal services. They prioritize rapid, cost-efficient, and factual responses by grounding outputs in trusted knowledge bases, but necessarily limit the system’s ability to perform deep, multi-step reasoning.
\item \textbf{Authenticity–Reasoning (AR) Systems:}
Deep research assistants and creative or analytic tools (e.g., OpenAI Deep Research, Gemini Flow) aim for both factual accuracy and robust, multi-step reasoning. Such systems leverage complex pipelines, external tool use, and iterative verification, and thus cannot provide real-time performance or operate under strict inference budgets.
\item \textbf{Budget–Reasoning (BR) Systems:}
Coding assistants (such as Cursor and GitHub Copilot) and certain agentic frameworks focus on delivering efficient reasoning and code generation under tight latency and cost constraints. However, these systems may generate outputs that are logical but not always factually reliable.
\end{enumerate}

This taxonomy highlights that the BAR trade-off is not merely theoretical—it is manifest in the architecture of GenAI products deployed today. Future work could explore dynamic BAR tuning, hybrid architectures, or adaptive pipelines that better balance these trade-offs in response to context, user needs, or real-time feedback.

\subsection{Limitations}

\textbf{Constant factors.}
Our lower bounds depend on hardware‐dependent constants $(\tau,\rho,\mu,\beta)$ that may vary across accelerators; future ASICs with in‑memory KV caches could shift the numerical break‑even point.

\textbf{Model–agnostic scope.}
We deliberately treat the LLM as a black‑box sampler. Architectural innovations \citep{wang2025hierarchical} (e.g.\ linear attention, hierarchical memory) may tighten~$\tau$ but cannot remove the \(\Omega(n)\) term mandated by Premise~A1.

\textbf{Empirical optimality.}
The theorem is worst‑case. A system designer can still optimize the \emph{observed} trade‑off surface for a
narrow task distribution (e.g.\ domain‑specific QA).

\subsection{Future Work}

This study lays the groundwork for systematically designing LLM-driven applications that balance three key factors—inference budget, external-tool calls, and reasoning depth. Future work will broaden the framework by mapping new trade-off dimensions along the Pareto frontier.

\newpage

\bibliography{iclr2025_conference}
\bibliographystyle{iclr2025_conference}


\end{document}